\newtheorem{theorem}{Theorem}
\newtheorem{claim}{Claim}
\title{Combinatorial optimization of the coefficient of determination}
\author{Marc Harary \\
Department of Data Science \\
Dana-Farber Cancer Institute \\
Boston, MA 02215 \\
\texttt{marc@ds.dfci.harvard.edu}
}
\begin{document}

\maketitle

\begin{abstract}
Robust correlation analysis is among the most critical challenges in statistics. Herein, we develop an efficient algorithm for selecting the $k$- subset of $n$ points in the plane with the highest coefficient of determination $\left( R^2 \right)$. Drawing from combinatorial geometry, we propose a method called the \textit{quadratic sweep} that consists of two steps: (i) projectively lifting the data points into $\mathbb R^5$ and then (ii) iterating over each linearly separable $k$-subset. Its basis is that the optimal set of outliers is separable from its complement in $\mathbb R^2$ by a conic section, which, in $\mathbb R^5$, can be found by a topological sweep in $\Theta \left( n^5 \log n \right)$ time. Although key proofs of quadratic separability remain underway, we develop strong mathematical intuitions for our conjectures, then experimentally demonstrate our method's optimality over several million trials up to $n=30$ without error. Implementations in Julia and fully seeded, reproducible experiments are available at \url{https://github.com/marc-harary/QuadraticSweep}.
\end{abstract}

\section{Introduction}

Correlation analysis is among the most essential tools in all of statistics, often one of the first tests applied to bivariate data \cite{pearson1896vii, degroot2002probability, casella2024statistical, hastie2009elements}. It has also been long since recognized that many common correlation measures are not robust; they are easily distorted by the presence of outlying data points that differ widely from the underlying distribution \cite{law1986robust, huber2011robust, rousseeuw2005robust, shevlyakov2016robust}. Robust correlation analysis therefore stands as a mathematical challenge of considerable importance. In the following work, we adopt the combinatorial approach of finding the $k$-subset out of $n$ points that maximizes the coefficient of determination $\left( R^2 \right)$. Equivalently, we select the $n-k$ outlying points whose removal will result in the largest increase in the goodness of fit of the least-squares regression line.

Thus far, many of the primary methods in robust statistics involve estimators that, though minimally affected by outliers, do not exclude them altogether. First developed by \cite{hodges2011estimates}, R-estimators \cite{sidak1999theory} are widely used for robustly determining both location and scale by computing results on the ranks of the data. The Spearman correlation coefficient (SCC) \cite{spearman1987proof}, Wilcoxon estimator \cite{wilcoxon1992individual}, Kendall's tau \cite{kendall1938new}, and the quadrant coefficient \cite{gibbons2014nonparametric} fall under this category. Biweight midcorrelation \cite{mosteller1977data} weighs data points in proportion to their distance from the distribution's centroid to minimize the effect of more distant values. Location and scatter can also be robustly measured by the minimum covariance determinant (MCD) estimator \cite{hubert2010minimum}. For an extensive introduction to robust estimators of correlation, the reader is directed to \cite{shevlyakov2016robust}.

Similar to our approach of partitioning the ``inliers'' and outliers, a limited number of prior combinatorial methods have been implemented. Random sample consensus (RANSAC) \cite{fischler1981random}, perhaps the most famous example, is an iterative non-deterministic algorithm that repeatedly sub-samples data to fit model parameters. Its limitation is that it is probabilistic; it is not guaranteed to sample the optimal set of inliers. More rigorously, least trimmed squares (LTS) \cite{rousseeuw1984least} deterministically identifies the subset with the smallest sum of squared errors when fitted to its respective line. But while this approach is optimal with respect to residuals, it does not, in general, identify the subset with the best $R^2$ \cite{shevlyakov2016robust}.

Herein, we treat goodness of fit as a blackbox combinatorial objective function to develop an efficient algorithm that deterministically identifies the optimal subset. Called the \textit{quadratic sweep}, its purpose is to solve an open question in discrete geometry rather than to provide a meaningful statistical estimator \textit{per se}. Our principle insight is that the optimal subset can be separated from its complement by a conic section in two dimensions. We show that, because goodness of fit ultimately reflects the inliers' anisotropy, outliers can be conceived of as ``pulling'' elliptical distributions parallel to their minor principal components. This ``pull'' that is complementary to the eccentricity of the inlying ellipse corresponds to a decision boundary that, naturally hyperbolic, separates the inliers and their complement. When the points are projected onto a higher-dimensional manifold embedded in $\mathbb R^5$, the boundary forms a linear functional that can be found via a polynomial-time topological sweep \cite{edelsbrunner1986topologically, edelsbrunner1987algorithms}.

Our paper is structured as follows. In Section \ref{sec:methods}, we provide a formal problem statement, enumerate prerequisite principals in statistics and geometry, introduce the main algorithm, and analyze its runtime. In Section \ref{sec:sep}, we develop the geometric intuitions about the relationships between outlying and inlying points that motivate the quadratic sweep. Section \ref{sec:exp} provides thorough experimental evidence of the algorithm's correctness and compares its performance to heuristic methods like RANSAC. Our results also provide a cautionary example of combinatorial optimization for isolating inliers that emphasizes the capacity of the quadratic sweep to suggest the presence of collinearity when no such trend exists. Concluding remarks are in Section \ref{sec:concl}, in which we discuss other limitations of our method and future directions for more rigorous proofs of the algorithms developed below.

\section{Methods}
\label{sec:methods}

\subsection{Problem statement}

Suppose that we have a dataset $\mathcal D = \left\{ (X_i, Y_i) \right\}_{i=1}^n$. We put the vectors $X = \left\{ X_1, X_2, \ldots, X_n \right\}$ and $Y = \left\{ Y_1, Y_2, \ldots, Y_n \right\}$. Our variance is denoted $\sigma_X^2 = \frac{1}{n} S_{XX} - \frac{1}{n^2} S_X^2$, where $S_{XX} = \sum_{i=1}^n X_i^2$ and $S_X = \sum_{i=1}^n X_i$. Equivalent definitions hold for $Y$. Our covariance is given by $\sigma_{XY} = \frac{1}{n} S_{XY} - \frac{1}{n^2} S_X S_Y$. The correlation is denoted
\begin{equation}
r = \frac{S_{XY} - \frac{1}{n} S_X S_Y}{\sqrt{S_{XX} - \frac{1}{n} S_X^2} \sqrt{S_{YY} - \frac{1}{n} S_Y^2}}.
\end{equation}
Similarly, the coefficient of determination, in this case for simple least squares (SLS), is denoted by
\begin{equation}
    R^2 = \frac{\left( S_{XY} - \frac{1}{n} S_X S_Y \right)^2}{\left( S_{XX} - \frac{1}{n} S_X^2 \right)\left( S_{YY} - \frac{1}{n} S_Y^2 \right)}.
\end{equation}

We conceive of $\mathcal D$ as being partitioned into outliers $\mathcal O$ and ``inliers'' $\mathcal I$, where $|\mathcal O| = n-k$ and $|\mathcal I| = k$. We assume that the effect of $\mathcal O$ is to non-strictly decrease the $R^2$ of our ``true'' distribution $\mathcal I$:
\begin{equation}
\mathcal O = \max_{\mathcal S \subseteq \mathcal D, \ |\mathcal S| = n - k} R^2_{\mathcal D \setminus \mathcal S} \quad \text{and} \quad \mathcal I = \max_{\mathcal S \subseteq \mathcal D, \ |\mathcal S| =k} R^2_\mathcal S.
\label{eq:inliers}
\end{equation}
Therefore, we are interested in solving the combinatorial optimization problem suggested by \eqref{eq:inliers}:
\begin{equation}
\begin{aligned}
    \max_{\mathbf{z} \in \{0, 1\}^n} \ & \frac{\left( S_{XY} - \frac{1}{k} S_X S_Y \right)^2}{\left( S_{XX} - \frac{1}{k} S_X^2 \right) \left( S_{YY} - \frac{1}{k} S_Y^2 \right)} \\
    \text{subject to} \ & \sum_{i=1}^{n} z_i = k, \\
    & S_V = \sum_{i=1}^{n} z_i V_i, \quad V \in \{XX, XY, YY, X, Y\}.
\end{aligned}
\tag{CDO}
\label{eq:CDO}
\end{equation}

For the maximal $\mathbf z^*$, we assume that
\begin{equation}
    \mathcal I = \left\{ \left( X_i, Y_i \right) \mid z^*_i = 1\right\} \quad \text{and} \quad \mathcal O = \left\{ \left( X_i, Y_i \right) \mid z^*_i = 0\right\}.
\end{equation}
Note that this assumption may very well be untrue for real-world datasets in which an outlier $\left(X_o, Y_o \right)$ could coincidentally be more favorable with respect to $R^2$ to include in the dataset rather than an inlier $\left(X_i, Y_i \right)$. Our goal, however, is to solve the combinatorics problem \eqref{eq:CDO} independent of its real-world statistical relevance.

\subsection{Linear separability and $k$-sets}

We say that two sets of points $A$ and $B$, corresponding to matrices $\mathbf A \in \mathbb R^{n \times d}$ and $\mathbf B \in \mathbb R^{k \times d}$, are \textit{linearly separable} in $\mathbb R^d$ if there exist $\mathbf w \in \mathbb R^d$ and $b \in \mathbb R$ such that $\mathbf w^\top \mathbf A_i \geq b \quad \forall i \in [n]$ and $\mathbf w^\top \mathbf B_i \leq b \quad \forall i \in [k]$. Moreover, putting $C := A \cup B$, we say that $B$ forms a \textit{$k$-set} of $C$.

Fixing $C$, the number of $k$-sets is bounded by $O \left(n^{ \lceil d/2 \rceil} k^{ \lfloor d/2 \rfloor}\right)$ in a well-known result \cite{clarkson1988applications}. A similar finding is that a data structure can be constructed in $O \left(n^{ \lceil d/2 \rceil} k^{ \lfloor d/2 \rfloor}\right)$ time to allow us to query, in $O(\log n)$ time, the subset $B \subseteq C$ such that $\mathbf w^\top \mathbf B_i \leq b$ for any $\mathbf w \in \mathbb R^d$ and $b \in \mathbb R$. Iterating over all $k$-sets in $C$ is referred to as a \textit{topological sweep} \cite{edelsbrunner1986topologically}.

We denote the \textit{convex hull} of $\mathbf A$ as
\begin{equation}
\operatorname{conv}\left(\mathbf A \right) := \left\{ \mathbf A \boldsymbol \lambda \mid \boldsymbol \lambda \in \mathbb R^{d}, \ \sum_{i=1}^d \lambda_i = 1, \ \boldsymbol \lambda_i \geq 0 \right\}.
\end{equation}
The Hyperplane Separation Theorem \cite{de2000computational} dictates that the convex hulls of $A$ and $B$ do not intersect if and only if they are linearly separable: $\operatorname{conv}\left(\mathbf A \right) \cap \operatorname{conv}\left(\mathbf B \right) = \emptyset \Longleftrightarrow \exists \mathbf w\in \mathbb R^d, d \in \mathbb R \ \forall i \in [d] \ \mathbf w^\top \mathbf A_i \geq b \text{ and } \mathbf w^\top \mathbf B_i \leq b$. Therefore, given a partition of $C$ into $A$ and $B$ such that $|A| = k$, we may conclude that $A$ is a $k$-set of $C$ if and only if the convex hulls of $A$ and $B$ do not intersect.

\subsection{Demonstrating separability}

We will map $\mathcal I$ and $\mathcal O$ into higher dimensional vector spaces such that $\operatorname{conv}\left( \mathbf V^{(\mathcal I)} \right) \cap \operatorname{conv}\left( \mathbf V^{(\mathcal O)} \right) = \emptyset$,  where $\mathbf V^{(\mathcal I)}$ and $\mathbf V^{(\mathcal O)}$ are the two sets of points concatenated into separate matrices. Intersections of the two hulls can be detected by computing the smallest distance $d^*$ between any two points $\mathbf v_1 \in \operatorname{conv}\left( \mathbf V^{(\mathcal I)} \right)$ and $\mathbf v_2 \in \operatorname{conv}\left( \mathbf V^{(\mathcal O)} \right)$. If $d^* = 0$, then the points are identical and the intersection is non-empty. We can compute $d^*$ by solving the following quadratic program \cite{frank1956algorithm} via an interior point method \cite{wright1997primal}:
\begin{equation}
\begin{aligned}
    \min \ & \lVert \boldsymbol{\lambda}^\top \mathbf V^{(\mathcal I)} - \boldsymbol{\mu}^\top \mathbf V^{(\mathcal O)} \rVert_2^2 \\
    \text{subject to} \ & \sum_{i=1}^{d} \lambda_i = 1, \\
    & \lambda_i \geq 0 \quad \forall i \in [d], \\
    & \sum_{i=1}^{d} \mu_i = 1, \\
    & \mu_i \geq 0 \quad \forall i \in [d]. \\
\end{aligned}
\tag{HDO}
\label{eq:HDO}
\end{equation}

\subsection{Quadratic sweep}
Motivating our algorithm is the conjecture (Claim \ref{claim:sep}) that $\mathcal I$ and $\mathcal O$ are separable via a curve $\Gamma$ that is quadratic in both $X$ and $Y$. Our task is now to efficiently iterate over each separable subset $S$. Here, we draw inspiration from Delaunay triangulation \cite{preparata2012computational, edelsbrunner1987algorithms} and Cover's theorem \cite{cover1965geometrical} to linearize $\mathcal I$ and $\mathcal O$ by ``lifting'' them to a higher dimensional space via
a projective transform $\mathcal{L}_d: \mathbb{R}^2 \to \mathbb{R}^d$.

Putting $\mathbf v_{d}^{(o)} := \mathbf{\mathcal L_{d}} \left( X_o, Y_o \right)$, our polynomial $\Gamma$ corresponds to a functional $\mathbf w^*_{d} \in \mathbb R^{d}$ such that $\left( \mathbf w^{*}_{d} \right)^\top \mathbf v_{d}^{(o)} < 0 \quad \forall i \in \mathcal I, \ o \in \mathcal O$. We now reduce $\mathbf V^{(\mathcal I)}$ to a $k$-set, meaning that the search for $\mathcal I$ can be performed via a topological sweep \cite{edelsbrunner1986topologically, edelsbrunner1987algorithms} in efficient time. Fixing $\mathbf w^*$, we recognize that it can be rotated to a limited extent while still separating $\mathbf V^{(\mathcal I)}$ and $\mathbf V^{(\mathcal O)}$; during our rotation, $\mathbf w^*$ only ceases to separate the two sets once it intersects a point $\mathbf v^{(o)}$ or $\mathbf v^{(i)}$. In total, $d$ intersections uniquely determine a hyperplane in $\mathbb R^d$ \cite{preparata2012computational}; in other words, $\mathbf w^*$ can be determined ultimately from a $d$-tuple of points $P$ in $\mathbf V^{(\mathcal D)}$, where $\mathbf V^{(\mathcal D)}$ is our lifted dataset. The hyperplane with $\mathbf w_{P}$ can be computed from the kernel of the matrix $L_P$ corresponding to $P$ and augmented with the vector $\mathbf 1$ to account for an intercept term:
\begin{equation}
    N \gets \ker([L_P, \mathbf{1}]).
\end{equation}
We then obtain $\mathbf w_{P}$ by excluding the intercept:
\begin{equation}
    \mathbf w_P \gets N_{1:(d-1)}.
\end{equation}

To compute our candidate $k$-sets, we lift the the complement of $P$, project it onto $\mathbf w_P$, and compute the antiranks of their products:
\begin{align}
   C &\gets \{1, 2, \dots, n\} \setminus P \\
   \mathcal{P} &\gets \langle L_C, \mathbf w_P \rangle  \\
    \pi &\gets \Call{ArgSort}{\mathcal{P}, \rho}
\end{align}
where $L_C$ denotes the lifted complement and $\rho \in \left\{ \texttt{false}, \texttt{true} \right\}$ is a flag to signify whether sorting is performed in ascending or descending order (its significance is further explained below). While we might now be tempted only to compute the score of $S$ with this set corresponding to $\pi[1:k]$, this fails to take into consideration the fact that $S$ might also contain a subset of $P$. We must therefore iterate over every subset $P^\prime \subseteq P$, then let $S$ be $P^\prime \cup C^\prime$, where $C^\prime = \pi[1:k-|P^\prime|]$. The full algorithm, \textproc{NaiveQuadraticSweep} is written below (\ref{alg:naive}). For a flexible implementation that is applicable to a range of objectives, the score and lift functions $\mathcal S$ and $\mathcal L$ are parameters of the function. It has the following runtime:
\begin{theorem}
    \label{thm:quadratic-sweep-runtime}
\textproc{NaiveQuadraticSweep} runs in $\Theta\left(n^{d+1} \log n \right)$ time.
\end{theorem}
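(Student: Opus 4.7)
The plan is to decompose \textproc{NaiveQuadraticSweep} into its outer loop over $d$-element subsets $P \subseteq \{1,\ldots,n\}$ and its per-tuple body, bound each factor separately, and then multiply. Throughout I treat $d$ as a fixed constant, so that the cost of manipulating the $O(d) \times O(d)$ matrices $L_P$ and $[L_P,\mathbf{1}]$ is $\Theta(1)$ and the number of subsets $P' \subseteq P$ is $2^d = \Theta(1)$.

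For the outer loop I would simply observe that there are $\binom{n}{d} = \Theta(n^d)$ tuples $P$. For the body, I would account for each step in turn: (i) $N \gets \ker([L_P,\mathbf{1}])$ and the extraction of $\mathbf{w}_P$ act on matrices of $O(1)$ size and hence cost $\Theta(1)$; (ii) forming $C$ and the projection $\mathcal{P} \gets \langle L_C, \mathbf{w}_P\rangle$ require $n-d$ inner products of constant dimension, i.e.\ $\Theta(n)$ work; (iii) the \textproc{ArgSort} call sorts $n-d$ scalars in $\Theta(n\log n)$ time, which will turn out to be the dominant term; (iv) the inner loop over $P' \subseteq P$ performs $2^d = \Theta(1)$ iterations, each of which slices a prefix of $\pi$ and evaluates the score $\mathcal{S}$ on a set of size $k$, costing $O(n)$ (or $\Theta(1)$ with running sums of the statistics $S_X, S_Y, S_{XX}, S_{XY}, S_{YY}$). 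Summing these contributions gives $\Theta(n\log n)$ work per outer iteration, and multiplying by $\Theta(n^d)$ outer iterations yields an upper bound of $\Theta(n^{d+1}\log n)$.

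For the matching lower bound I would appeal to the comparison model: each outer iteration sorts $n-d$ projected values, which requires $\Omega(n\log n)$ comparisons, and this sort is executed once per $d$-tuple, of which there are $\Omega(n^d)$. Combining with the upper bound gives the claimed $\Theta(n^{d+1}\log n)$.

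The main obstacle is purely bookkeeping rather than mathematical: one must be careful to treat $d$ as constant (absorbing the $\Theta(d^3)$ kernel computation and the $2^d$ subset enumeration into $\Theta(1)$) and, for the $\Omega$ half of $\Theta$, to invoke the comparison-based sorting lower bound explicitly; if one only needs the $O$ direction, the latter concern drops out and the argument becomes a direct line-by-line tally.
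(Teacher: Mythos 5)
Your proposal is correct and follows essentially the same route as the paper: a line-by-line tally of the loop body in which the $\Theta(n\log n)$ sort dominates the $O(d^3)$ kernel computation, the $\Theta(n)$ projection, and the constantly many subsets $P'$, multiplied by the $\Theta(n^d)$ outer iterations. The only real difference is in the $\Omega$ direction, where you invoke the comparison-sorting lower bound while the paper merely observes that the outer loop never terminates early; your version is the slightly more explicit of the two, but the decomposition and conclusion are identical.
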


\begin{proof}
    We assume that the lifting procedure in line 2 requires $O(nd)$ time. The outer loop in line 7 runs for a total of $O(n^d)$ iterations. Within it, computing the kernel of $\left[ L_P, \mathbf 1 \right]$ in line 9 can be performed by a range of methods (e.g., QR decomposition \cite{householder1958unitary}, singular value decomposition \cite{eckart1936approximation}), all of which run in $O(d^3)$ time. $C$ is computable in $O(n)$ time and the matrix-vector multiplication in line 12 requires $O(nd)$ time. Sorting in line 13 has a trivial runtime of $O(n \log n)$. The inner loop at line 14 requires $d$ iterations, each of which require $O(n)$ time, meaning that the total runtime of lines 14-24 is $O(nd)$. In turn, this implies that the total run of the inner loop is $O(d^3 + n \log n + nd)$. Typically, we can expect to have $d \ll n$ such that sorting will dominate the matrix operations, giving us a total of $O(n \log n)$ for the inner loop. Therefore, our total runtime is $O(n^{d+1} \log n)$. Moreover, the outer loop in line 7 does not terminate prematurely; this gives us a runtime of $\Theta(n^{d+1} \log n)$.
\end{proof}

Notably, however, \textproc{NaiveQuadraticSweep} can be improved due to the results from \cite{clarkson1988applications}:
\begin{theorem}
    Quadratic sweeps runs in $\Theta \left(n^d \log n \right)$ time.
\end{theorem}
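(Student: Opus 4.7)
The plan is to replace the naive enumeration in lines 7--13 of \textproc{NaiveQuadraticSweep} with the $k$-set query structure of \cite{clarkson1988applications} cited earlier, whose advantage is that it amortizes over the many overlapping candidate hyperplanes that the naive algorithm re-evaluates from scratch.

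First I would build this data structure on the lifted dataset $\mathbf V^{(\mathcal D)}$ in $O\bigl(n^{\lceil d/2 \rceil} k^{\lfloor d/2 \rfloor}\bigr)$ preprocessing time, which is $O(n^d)$ since $k \leq n$. Next I would enumerate the combinatorially distinct candidate hyperplanes via a topological sweep of the dual arrangement in $\mathbb R^d$; by the $k$-set bound, only $O(n^d)$ such states arise, and each can be associated with its $k$-set in $O(\log n)$ time using the Clarkson structure. Because successive states in the sweep differ by $O(1)$ points, the sufficient statistics $S_X, S_Y, S_{XX}, S_{YY}, S_{XY}$ and hence the $R^2$ objective can be maintained by constant-time updates per transition, avoiding the $O(n \log n)$ per-iteration sorting and the $O(d^3)$ kernel computation incurred by the naive loop. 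Multiplying $O(n^d)$ enumeration steps by $O(\log n)$ per query yields the claimed upper bound, and the matching $\Omega(n^d \log n)$ lower bound follows from the observation that no candidate state may be skipped without risking the optimum, combined with the $\Theta(n^d)$ worst-case $k$-set count in general position and the $\Omega(\log n)$ point-location lower bound in the comparison model.

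The main obstacle, I expect, will be showing that the iteration over subsets $P' \subseteq P$ in \textproc{NaiveQuadraticSweep} is genuinely absorbed into the Clarkson enumeration rather than constituting independent work. Geometrically, each such $P'$ corresponds to a limiting rotation of $\mathbf w^*$ in which a subset of the $d$ incident points is reassigned across the separator; these limits coincide with neighboring cells of the dual arrangement, which the topological sweep already visits in sequence. I would formalize this correspondence by showing that the multiset of candidate $k$-sets produced by the naive double loop is exactly the collection of $k$-sets of the lifted configuration, so the improved algorithm loses no candidates while gaining a factor of $n / \log n$.
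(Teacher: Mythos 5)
Your proposal follows essentially the same route as the paper's proof: build the $k$-set query structure of Clarkson--Shor in $O\bigl(n^{\lceil d/2 \rceil} k^{\lfloor d/2 \rfloor}\bigr)$ time, answer each of the $O(n^d)$ candidate queries in $O(\log n)$ time, and argue the matching lower bound from the worst-case $k$-set count. Your additional observations --- that the sufficient statistics must be maintained incrementally in $O(1)$ per transition rather than recomputed, and that the iteration over subsets $P' \subseteq P$ must be shown to be absorbed into the sweep --- are points the paper's own (quite terse) proof glosses over, and would strengthen the argument if carried out.
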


\begin{proof}
    Building the data structure described by \cite{clarkson1988applications} requires $O \left( n^{\lceil d/2 \rceil} k^{\lfloor d/2 \rfloor} \right)$. Each subsequent range query runs in only $O(\log n)$ time, and will still require $O \left( n^d \right)$ such queries. We therefore have a net runtime of $O(n^d \log n)$. Similar reasoning as above gives us a runtime of $\Theta(n^d \log n)$.
\end{proof}

\begin{algorithm}[h!]
\caption{Optimal Subset Selection via Lifting}
\begin{algorithmic}[1]
\Function{NaiveQuadraticSweep}{$X, Y, k, \mathcal{L}, \mathcal{S}, \rho$}
    \State $\mathcal{L}(X, Y) \to L$ 
    \State $n \gets |L|$
    \State $d \gets \dim(L)$
    \State $S^* \gets \emptyset$
    \State $\mathcal{S}^* \gets -\infty$
    
    \For{$P \subseteq \{1, 2, \dots, n\}, |P| = d$}
        \State $L_P \gets \{ L_i \mid i \in P \}$
        \State $N \gets \ker([L_P, \mathbf{1}])$ 
        \State $w \gets N_{1:(d-1)}$ 
        
        \State $C \gets \{1, 2, \dots, n\} \setminus P$
        \State $\mathcal{P} \gets \langle L_C, w \rangle$
        \State $\pi \gets \Call{ArgSort}{\mathcal{P}, \rho}$ 
        
        \For{$i \in \{1, 2, \dots, \lceil d/2 \rceil \}$}
            \For{$P' \subseteq P, |P'| = i$}
                \For{$C' \subseteq \pi[1:k-i]$}
                    \State $S \gets P' \cup C'$
                    \If{$\mathcal{S}(X_S, Y_S) > \mathcal{S}^*$}
                        \State $\mathcal{S}^* \gets \mathcal{S}(X_S, Y_S)$
                        \State $S^* \gets S$
                    \EndIf
                \EndFor
            \EndFor
        \EndFor
    \EndFor
    
    \State \Return $S^*$
\EndFunction
\end{algorithmic}
\label{alg:naive}
\end{algorithm}

\section{Quadratic separability}
\label{sec:sep}

In the following section, we progressively build towards an intuition of the quadratic separability of $\mathcal I$ and $\mathcal O$ by studying closely related objective functions. Though we only provide proofs for univariate variance---the simplest of our objectives---we suspect that one path to more rigorous solutions lies in matroid theory \cite{oxley2006matroid}. Creating level sets of each objective function, we hypothesize that filtrations \cite{oxley2006matroid} may be helpful in proving that inliers and outliers remain separable when considering various subsets of the data.

Furthermore, it is critical to note that of the following optimization problems, some involve minimization rather than maximization. In these cases, inliers lie outside the boundary $\Gamma$ rather than inside it. Sorting the complement $C$ of each hyperplane corresponding to a $d$-tuple $P$ therefore requires ascending rather than descending order. As a consequence, the flag $\rho$ must be able to be set either \texttt{true} or \texttt{false} for a software implementation of \textproc{NaiveQuadraticSweep}, hence its inclusion as a parameter in our pseudocode.

\subsection{Univariate variance}

As a warm-up, we consider solving the far simpler optimization problem  that consists of minimizing $\sigma^2_X$. This is equivalent to minimizing the sum of squared deviations:
\begin{equation}
\begin{aligned}
    \min_{\mathbf{z} \in \{0, 1\}^n} \ & S_{XX} - \frac{1}{k} S_X^2 \\
    \text{subject to} \ & \sum_{i=1}^{n} z_i = k, \\
    & S_V = \sum_{i=1}^{n} z_i V_i, \quad V \in \{XX, X\}.
\end{aligned}
\tag{VO}
\label{eq:VO}
\end{equation}

On one hand, its solution is trivially computable in $O( n \log n)$ by simply sorting $X$. The antiranks $\pi$ can then be searched for a set of $k$ adjacent points in a loop requiring $O(n)$ time to find $X_{\pi\left[\ell:\ell+k \right]}$. This is due to the fact that no outlier $o$ can lie in between two inliers $i_1$ and $i_2$ on the continuum; $\mathcal I$ must, in other words, be tightly ``packed'' with no outliers in between (see Figure \ref{fig:vo_linear}). Therefore, we can create a decision boundary that consists only of the pair
\begin{equation}
    \Gamma: \left\{ \Gamma_1, \Gamma_2 \right\}
\end{equation}
separating $\mathcal I$ and $\mathcal O$.

We can improve on the sorting approach if we consider higher-dimensional lifting. Suppose that we applied the following transform to our data:
\begin{equation}
\mathcal{L}_2: \mathbb{R} \to \mathbb{R}^2, \quad x \mapsto \mathcal{L}_2(x) = \left(x^2, x \right).
\label{eq:l2}
\end{equation}
This effectively projects the points onto the surface of a parabola, introducing curvature that we can exploit. When the data are lifted, lines now intersect our data manifold at two distinct points, enabling a single line $H$ to create our decision boundary $\Gamma$ at $\Gamma_1$ and $\Gamma_2$ simultaneously (see Figure \ref{fig:vo}). Separability can be formalized as follows.
\begin{theorem}
    In \eqref{eq:VO}, $\mathcal I$ and $\mathcal O$ are quadratically separable.
\end{theorem}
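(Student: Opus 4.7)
My plan is to reduce the claimed quadratic separability to the following structural lemma: any optimal inlier set $\mathcal{I}$ for \eqref{eq:VO} consists of $k$ consecutive points in the sorted order of $X$. Once this is established, exhibiting the separator in the lifted space reduces to writing down the quadratic that vanishes just outside the two endpoints of that block.

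For the structural lemma, I would argue by a swap contradiction. Suppose $\mathcal{I}$ is optimal but not consecutive, so there exist $i_1, i_2 \in \mathcal{I}$ and $o \in \mathcal{O}$ with $X_{i_1} < X_o < X_{i_2}$. Let $\bar{X}$ denote the mean of $X$ restricted to $\mathcal{I}$. A short case split on whether $X_o \leq \bar{X}$ or $X_o > \bar{X}$ shows that
\[
|X_o - \bar{X}| < \max\bigl\{|X_{i_1} - \bar{X}|,\ |X_{i_2} - \bar{X}|\bigr\},
\]
simply because $X_o$ lies strictly between $X_{i_1}$ and $X_{i_2}$. Form $\mathcal{I}'$ by swapping $o$ with whichever of $i_1, i_2$ attains the maximum. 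Because $\bar{X}$ is in general a suboptimal reference point for $\mathcal{I}'$, we have $\sigma_X^2(\mathcal{I}') \leq \tfrac{1}{k}\sum_{j \in \mathcal{I}'}(X_j - \bar{X})^2$, and by the strict inequality above the right-hand side is strictly less than $\sigma_X^2(\mathcal{I})$, contradicting optimality. Ties in $X$ can be broken by any fixed rule, which yields an equivalent optimum that is consecutive.

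Given the lemma, write the sorted inliers as $X_{(\ell+1)} \leq \cdots \leq X_{(\ell+k)}$ and choose thresholds $a, b$ with $a \leq X_{(\ell+1)}$, $b \geq X_{(\ell+k)}$, and $a > X_{(\ell)}$, $b < X_{(\ell+k+1)}$ (where the two outer conditions are vacuous at the boundary $\ell = 0$ or $\ell + k = n$). Define
\[
q(x) \;=\; (x-a)(x-b) \;=\; x^2 - (a+b)\,x + ab.
\]
By construction, $q(X_i) \leq 0$ for every $i \in \mathcal{I}$ and $q(X_o) > 0$ for every $o \in \mathcal{O}$. Under the lift $\mathcal{L}_2(x) = (x^2, x)$, we have $q(x) = \mathbf{w}^\top \mathcal{L}_2(x) + c$ with $\mathbf{w} = (1,\ -(a+b))$ and $c = ab$, so $q$ is an affine functional in $\mathbb{R}^2$. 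This is precisely a linear separator between the lifted sets $\mathbf{V}^{(\mathcal{I})}$ and $\mathbf{V}^{(\mathcal{O})}$, establishing quadratic separability in the original coordinate.

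The substantive step is the swap lemma; the separator construction is then pure algebra. I expect the only wrinkle to be bookkeeping around coincident coordinates and the one-sided boundary cases ($\ell = 0$ or $\ell + k = n$), where the interval $[a,b]$ is effectively one-sided; in those cases one may take $a$ or $b$ arbitrarily far from the data, or equivalently degenerate $q$ to a linear cut, neither of which affects the separation argument.
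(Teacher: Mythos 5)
Your proof is correct and follows essentially the same route as the paper's: a swap argument showing that no outlier can lie strictly between two inliers (so the optimal $\mathcal I$ is a consecutive block in sorted order), followed by an explicit quadratic $(x-a)(x-b)$ that becomes an affine separator under the lift $\mathcal L_2$. Your execution of the swap step---comparing $|X_o - \bar X|$ to the larger of $|X_{i_1}-\bar X|$ and $|X_{i_2}-\bar X|$ and invoking the fact that the mean minimizes the sum of squared deviations---is actually cleaner than the paper's, which stacks two ``without loss of generality'' normalizations that do not jointly hold and only carries out the swap with $i_1$.
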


\begin{proof}
    Fix $\mathcal I$ and $\mathcal O$. We claim that there is no point $X_o$ such that $X_{i_1} \leq X_o \leq X_{i_2}$ for all $i_1, i_2 \in \mathcal I$; by way of contradiction, suppose that this is not the case. Without loss of generality, let $X_{i_1} \leq X_o \leq X_{i_2} \leq 0$. We remove $X_{i_1}$ from $\mathcal I$ to obtain a dataset $\mathcal I^\prime$, and assume, again without loss of generality, that $S^\prime_X = \sum_{i \neq i_1} X_i = 0$. If we add back $i_1$ back to $\mathcal I^\prime$, our new sum of squared deviations will be given by $SSD_{i_1} = S^\prime_{XX} + X_{i_1}^2 - \frac{1}{k} X_{i_1}^2 =S^\prime_{XX} - \frac{k-1}{k} X_{i_1}^2$. Similarly, adding $o$ to $\mathcal I^\prime$, we obtain a new sum of squared deviations given by $SSD_{o} = S^\prime_{XX} - \frac{k-1}{k} X_{o}^2$. But because $X_{i_1} \leq X_o \leq 0$, we then have $SSD_{i_2} \geq SSD_{o}$, which contradicts optimality.

    Now let $X^\prime = \max_{i \in \mathcal I} \left( \bar X_\mathcal I - X_i \right)^2$, where $\bar X_\mathcal I$ is the mean of $\mathcal I$. It follows from our previous result that there can be no $X_o \in \left[ \bar X_\mathcal I - X^\prime, X_\mathcal I + X^\prime \right]$. Put
    \begin{equation}
        \Gamma = \left\{ \left( X, X^2 \right) \mid \exists \lambda \in \left[ 0, 1 \right] \ X = \bar X_\mathcal I - X^\prime + 2 \lambda \left( X^\prime \right) ^2 \right\}.
    \end{equation}
    Due to the convexity of our lift function $\mathcal L_2$, we then know that $o \in \mathcal O$ cannot lie beneath $\Gamma$. Then $\Gamma$ quadratically separates $\mathcal O$ and $\mathcal I$, completing our proof.
\end{proof}

Additionally, our runtime can potentially improve if $k < \log n$:
\begin{theorem}
    \eqref{eq:VO} can be solved in $O\left( n (k+1)^{1/3} \right)$.
\end{theorem}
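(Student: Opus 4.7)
The plan is to replace the generic topological-sweep analysis with a direct appeal to the planar $k$-set bound. By the previous theorem, the lift $\mathcal{L}_2: x \mapsto (x^2, x)$ renders $\mathcal{I}$ linearly separable from $\mathcal{O}$ in $\mathbb{R}^2$, so $\mathcal{I}$ is a planar $k$-set of the lifted configuration. It therefore suffices to enumerate all $k$-sets of the lifted points and to evaluate the variance objective on each.

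First I would invoke Dey's bound, which sharpens Lov\'asz's $O(n\sqrt{k})$ estimate to $O\bigl(n(k+1)^{1/3}\bigr)$ for the number of $k$-sets of $n$ points in the plane. Second, I would enumerate these $k$-sets via an output-sensitive $k$-level algorithm (for instance, an Edelsbrunner--Welzl topological sweep restricted to the $k$-level, or Chan's refinement), so that the total enumeration cost is proportional to the $k$-set count. Because consecutive $k$-sets in such a sweep differ by a single point swap, the running sums $S_X$ and $S_{XX}$ update in $O(1)$, and the candidate value $S_{XX} - S_X^2/k$ is recomputable in constant time per combinatorial event. Multiplying the $k$-set count by the $O(1)$ amortized per-event work yields the target runtime.

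The main obstacle is matching Dey's count \emph{algorithmically} rather than only \emph{combinatorially}: Dey's theorem bounds the output size, whereas the stated runtime requires the per-event work to be truly $O(1)$ amortized, with no hidden $\log n$ factor from the underlying search structure and no preprocessing cost exceeding the output bound. Here I would exploit the fact that the lifted points $\bigl(x_i^2, x_i\bigr)$ all lie on a parabola and are therefore in convex position; the $k$-level then specializes to walking a concave chain with linear-order swap updates, which removes the logarithmic overhead while still respecting $O\bigl(n(k+1)^{1/3}\bigr)$ as an upper envelope on the total work. Verifying that the convex-position specialization really attains the Dey bound without a $\log$ factor, and handling the initialization of the first $k$-set within the same budget, is the technical crux.
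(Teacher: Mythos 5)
Your proposal follows essentially the same route as the paper: invoke the $O\bigl(n(k+1)^{1/3}\bigr)$ planar $k$-set bound, enumerate the $k$-sets by a sweep, and maintain the running sums $S_X$ and $S_{XX}$ with constant-time updates per swap so that the objective is recomputable per event in $O(1)$. Your added observation that the lifted points lie in convex position on a parabola (so the level walk avoids logarithmic overhead) is in fact more careful than the paper's own argument, which appeals to a dynamic convex hull and does not address the potential hidden $\log$ factor.
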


\begin{proof}
    In a well-known result due to \cite{chan1999remarks}, we recall that the number of $k$-sets in the plane given $n$ points is bounded by $O\left( n (k+1)^{1/3} \right)$. These subsets can be swept in a similar runtime by maintaining a dynamic convex hull \cite{brodal2002dynamic}. It is then trivial to maintain dynamic sums $S_{XX}$ and $S_X$ of our current $k$-set that can be subsequently used to compute the sum of squared deviations as per \eqref{eq:VO}. Updates can be performed in a manner similar to those in Welford's online algorithm \cite{welford1962note}, ensuring that the additional computational cost incurred by each iteration of the sweep remains constant. This gives us our desired runtime.
\end{proof}

\begin{figure}
    \centering
    % First row
    \begin{subfigure}[b]{0.3\textwidth}
        \centering
        \includegraphics[width=\textwidth]{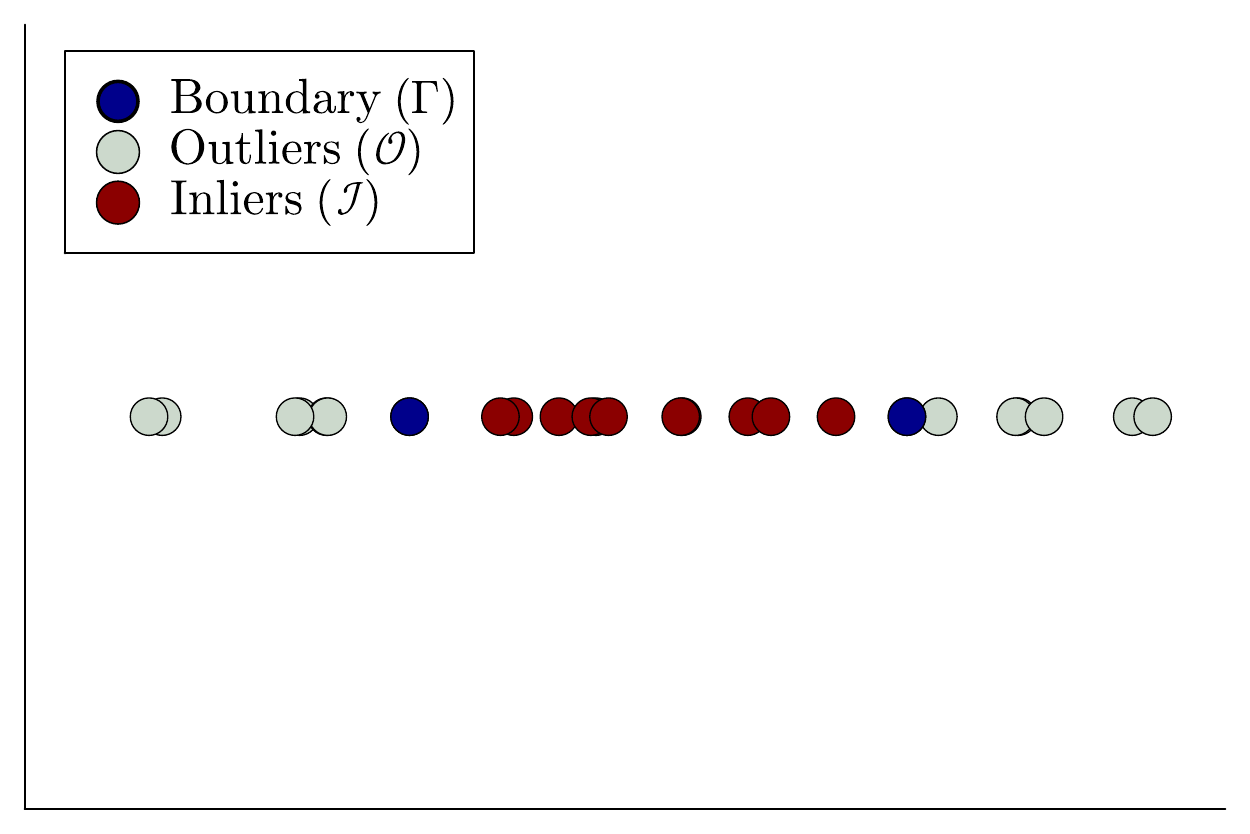}
        \caption{Unidimensional variance \eqref{eq:VO}}
        \label{fig:vo_linear}
    \end{subfigure}
    \hfill
    \begin{subfigure}[b]{0.3\textwidth}
        \centering
        \includegraphics[width=\textwidth]{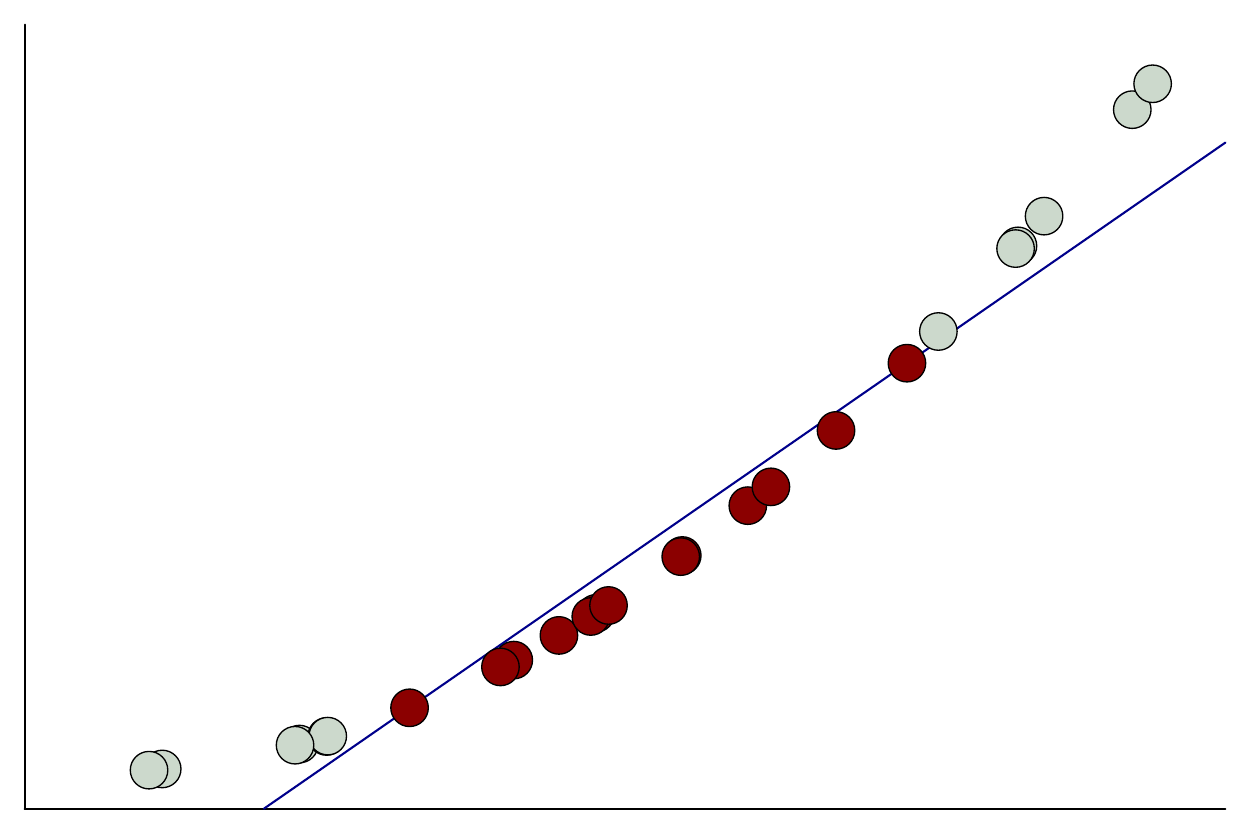}
        \caption{$X$ lifed via $\mathcal L_2$ \eqref{eq:l2}}
        \label{fig:vo}
    \end{subfigure}
    \caption{Higher-dimensional lifting from $\mathbb R$ to $\mathbb R^2$ linearizes quadratic boundaries, enabling a topological plane sweep. Our boundary is converted from two points $\Gamma_1, \Gamma_2 \in \mathbb R$ to a linear decision boundary $\Gamma < \mathbb R^2$.}
\end{figure}
\subsection{Total variation}

In two dimensions, we now consider the problem of minimizing the total variation, $\sigma_X^2 + \sigma_Y^2$:
\begin{equation}
\begin{aligned}
    \min_{\mathbf{z} \in \{0, 1\}^n} \ & \left( S_{XX} - \frac{1}{k} S_X^2 \right) + \left( S_{YY} - \frac{1}{k} S_Y^2 \right) \\
    \text{subject to} \ & \sum_{i=1}^{n} z_i = k, \\
    & S_V = \sum_{i=1}^{n} z_i V_i, \quad V \in \{XX, YY, X, Y\}.
\end{aligned}
\tag{TVO}
\label{eq:TVO}
\end{equation}
A similar principle applies, only we now create a boundary
\begin{equation}
\Gamma: \frac{\left( X - X_\circ \right)^2}{A^2} + \frac{\left( Y - Y_\circ \right)^2}{B^2} = C,
\end{equation}
which consists of an ellipse with parameters $X_\circ, Y_\circ, A, B, C \in \mathbb R$. The fact that $\Gamma$ is elliptical follows intuitively from the definition of our objective function, which is identical to that of an equation for this conic section. Our lift function now becomes
\begin{equation}
\mathcal{L}_4: \mathbb{R} \to \mathbb{R}^4, \quad x \mapsto \mathcal{L}_4(x) = \left(x^2, y^2, x, y \right).
\end{equation}
We are therefore prompted to conjecture separability as before:
\begin{claim}
    In \eqref{eq:TVO}, $\mathcal I$ and $\mathcal O$ are quadratically separable.
\end{claim}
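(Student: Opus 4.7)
The plan is to adapt the swap-based argument from the proof for \eqref{eq:VO} to two dimensions, exploiting the identity
\[
\left( S_{XX} - \tfrac{1}{k} S_X^2 \right) + \left( S_{YY} - \tfrac{1}{k} S_Y^2 \right) = \sum_{i \in \mathcal I} \left\lVert (X_i, Y_i) - (\bar X_\mathcal I, \bar Y_\mathcal I) \right\rVert_2^2,
\]
which recasts \eqref{eq:TVO} as minimizing the sum of squared Euclidean distances from the centroid of $\mathcal I$. The target statement becomes: no outlier lies strictly inside the smallest enclosing disk of $\mathcal I$ centered at its own centroid. Its boundary is a circle, which is a special case ($A = B$) of the paper's ellipse form and is in particular quadratic in $X$ and $Y$.

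The key technical step is to handle the fact that swapping an inlier for an outlier shifts the centroid. To sidestep this, I will use the variational characterization of the centroid: for any finite $S \subset \mathbb R^2$, its centroid uniquely minimizes $c \mapsto \sum_{p \in S} \left\lVert p - c \right\rVert_2^2$. Fix the optimal $\mathcal I$ and set $r^2 := \max_{i \in \mathcal I} \left\lVert (X_i, Y_i) - (\bar X_\mathcal I, \bar Y_\mathcal I) \right\rVert_2^2$ with extremal inlier $i^*$; suppose for contradiction that some $(X_o, Y_o) \in \mathcal O$ has squared distance to the centroid strictly less than $r^2$. Form $\mathcal I' := (\mathcal I \setminus \{i^*\}) \cup \{(X_o, Y_o)\}$. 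The total variation of $\mathcal I'$ at its own centroid is upper bounded by its sum of squared distances to the old centroid, and comparing term by term, this upper bound equals the original objective plus $\left\lVert o - (\bar X_\mathcal I, \bar Y_\mathcal I) \right\rVert_2^2 - \left\lVert i^* - (\bar X_\mathcal I, \bar Y_\mathcal I) \right\rVert_2^2 < 0$, contradicting the optimality of $\mathcal I$.

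The above forces every outlier to satisfy $\left\lVert (X_o, Y_o) - (\bar X_\mathcal I, \bar Y_\mathcal I) \right\rVert_2^2 \geq r^2$, so the circle $\Gamma: (X - \bar X_\mathcal I)^2 + (Y - \bar Y_\mathcal I)^2 = r^2$ quadratically separates $\mathcal I$ and $\mathcal O$; under the lift $\mathcal L_4(x, y) = (x^2, y^2, x, y)$ it pulls back to the hyperplane with normal $(1, 1, -2 \bar X_\mathcal I, -2 \bar Y_\mathcal I)$ and offset $r^2 - \bar X_\mathcal I^2 - \bar Y_\mathcal I^2$, furnishing the separating functional $\mathbf w^*_4$ needed for the topological sweep. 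I do not anticipate a serious obstacle — the argument is a direct two-dimensional replay of the univariate case, with the centroid-as-minimizer identity cleanly absorbing the shift under point swaps. The only minor subtlety is the tie case in which an outlier sits exactly at distance $r$ from the centroid, where the swap inequality becomes non-strict; this yields non-strict separation, which still suffices for the Hyperplane Separation Theorem invoked earlier, and under a generic-position assumption these ties do not arise.
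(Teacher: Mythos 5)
Your proof is correct, and it is worth emphasizing that the paper does not actually prove this statement: \eqref{eq:TVO} separability is left as an unproven Claim, supported only by the informal observation that the objective ``looks like'' the equation of an ellipse, with rigorous proofs supplied only for the univariate case \eqref{eq:VO}. Your argument settles the claim. The identity recasting the objective as $\sum_{i\in\mathcal I}\lVert p_i-\bar p_{\mathcal I}\rVert_2^2$ is right, the variational characterization of the centroid correctly absorbs the centroid shift under the swap, and the resulting circle $\lVert p-\bar p_{\mathcal I}\rVert^2=r^2$ is indeed affine in the $\mathcal L_4$ coordinates, so you get a genuine separating functional in $\mathbb R^4$. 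Two small remarks. First, your boundary is a circle rather than the general axis-aligned ellipse the paper conjectures; that is a strictly stronger (more specific) conclusion and is perfectly compatible with the claim, since a circle is quadratic in $X$ and $Y$. Second, your tie case is actually cleaner than you suggest: if an outlier $o\neq i^*$ sits exactly at distance $r$, the swap still strictly decreases the objective, because $\sum_{p\in\mathcal I'}\lVert p-\bar p_{\mathcal I}\rVert^2=\sum_{p\in\mathcal I'}\lVert p-\bar p_{\mathcal I'}\rVert^2+k\lVert\bar p_{\mathcal I'}-\bar p_{\mathcal I}\rVert^2$ and the centroid moves by $\frac{1}{k}(o-i^*)\neq 0$; so ties cannot occur except for coincident points, and the separation is strict without any generic-position assumption. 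Your argument also specializes to one dimension to give a cleaner proof of the paper's \eqref{eq:VO} theorem than the one printed there, and it is the kind of swap-plus-variational argument the authors say they hope to obtain via matroid machinery.
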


\subsection{Difference of variances and covariance}
Third, we consider the deceptively informative task of maximizing $\sigma_X^2 - \sigma_Y^2$:
\begin{equation}
\begin{aligned}
    \max_{\mathbf{z} \in \{0, 1\}^n} \ & \left( S_{XX} - \frac{1}{k} S_X^2 \right) - \left( S_{YY} - \frac{1}{k} S_Y^2 \right) \\
   \text{subject to} \ & \sum_{i=1}^{n} z_i = k, \\
    & S_V = \sum_{i=1}^{n} z_i V_i, \quad V \in \{XX, YY, X, Y\}.
\end{aligned}
\tag{DVO}
\label{eq:DVO}
\end{equation}
In this case, we can predict that
\begin{equation}
\Gamma: \frac{\left( X - X_\circ \right)^2}{A^2} - \frac{\left( Y - Y_\circ \right)^2}{B^2} = C,
\end{equation}
a decision boundary consisting of a horizontal hyperbola at $
\left(X_\circ, Y_\circ \right)$. If we define $U = X + Y$ and $V = X - Y$, it can be shown that
\begin{equation}
    \sigma^2_U - \sigma^2_V \propto \sigma_{XY},
\end{equation}
meaning that to \eqref{eq:DVO}, we can reduce a fourth problem of maximizing $\sigma_{XY}$ with an appropriate change of variables:
\begin{equation}
\begin{aligned}
    \max_{\mathbf{z} \in \{0, 1\}^n} \ & S_{XY} - \frac{1}{k} S_X S_Y \\
    \text{subject to} \ & \sum_{i=1}^{n} z_i = k, \\
    & S_V = \sum_{i=1}^{n} z_i V_i, \quad V \in \{XY, X, Y\}.
\end{aligned}
\tag{CVO}
\label{eq:CVO}
\end{equation}
Correspondingly, our decision boundary is rotated $\pi/4$ radians to form a region bounded by a rectangular hyperbola
\begin{equation}
\Gamma: \left( X - X_\circ \right) \left( Y - Y_\circ \right) = A.
\label{eq:hyper}
\end{equation}
Retaining our lift function $\mathcal{L}_4$, we again claim separability.
\begin{claim}
    In \eqref{eq:DVO} and \eqref{eq:CVO}, $\mathcal I$ and $\mathcal O$ are quadratically separable.
\end{claim}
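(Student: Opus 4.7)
The plan is to first reduce \eqref{eq:CVO} to \eqref{eq:DVO} through the change of variables $U = X + Y$, $V = X - Y$: since the linear map $(X,Y) \mapsto (U,V)$ carries $\sigma_{XY}$ to $\tfrac14(\sigma_U^2 - \sigma_V^2)$, an optimal partition for \eqref{eq:CVO} coincides with an optimal partition for \eqref{eq:DVO} in the rotated coordinates. Moreover, any hyperbola of the form $(U - U_\circ)^2/A^2 - (V - V_\circ)^2/B^2 = C$ pulls back under the $\pi/4$ rotation to a rectangular hyperbola in $(X, Y)$ of the form \eqref{eq:hyper}, so it suffices to establish quadratic separability for \eqref{eq:DVO}.

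For \eqref{eq:DVO} I would mirror the swap-based argument of the univariate variance proof. Fix the optimal partition $(\mathcal I^*, \mathcal O^*)$; by translation-invariance of $\sigma_X^2 - \sigma_Y^2$, assume without loss of generality that $\bar X_{\mathcal I^*} = \bar Y_{\mathcal I^*} = 0$. For any $i \in \mathcal I^*$ and $o \in \mathcal O^*$, writing out the change in the objective after the swap $i \leftrightarrow o$ yields
\begin{equation}
    \Delta = \frac{1}{k}\bigl[(X_o^2 - Y_o^2) - (X_i^2 - Y_i^2)\bigr] - \frac{1}{k^2}\bigl[(X_o - X_i)^2 - (Y_o - Y_i)^2\bigr],
\end{equation}
and optimality enforces $\Delta \leq 0$ for every such pair. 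Modulo the second bracketed term, this is precisely the assertion that the level sets of $f(X,Y) = X^2 - Y^2$ separate $\mathcal I^*$ from $\mathcal O^*$, and the predicted separator $(X - X_\circ)^2/A^2 - (Y - Y_\circ)^2/B^2 = C$ is obtained by letting the center and semi-axes float so as to absorb both the centering offset and the quadratic correction.

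A complementary perspective comes from the KKT stationarity conditions of the continuous relaxation $z \in [0,1]^n$. A direct computation shows that $\partial(\sigma_X^2 - \sigma_Y^2)/\partial z_j$ equals $\tfrac{1}{k}[(X_j - \bar X)^2 - (Y_j - \bar Y)^2]$ up to an additive constant independent of $j$, so any fractional stationary point is partitioned by a hyperbola centered at $(\bar X, \bar Y)$ with unit semi-axes, with threshold set by the Lagrange multiplier for the cardinality constraint. This supplies the algebraic form of $\Gamma$ that the claim predicts and agrees with the leading term of $\Delta$ above.

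The hard step is precisely the cross-term $(X_o - X_i)^2 - (Y_o - Y_i)^2$ in $\Delta$, which is exactly the discrepancy between discrete swap optimality and continuous KKT stationarity. Because this term may be of either sign, $\Delta \leq 0$ does not decouple into an inlier side and an outlier side and instead yields only a pair-dependent bound. I expect the resolution to take one of two forms: either fit the axis scales $(A,B)$ and center $(X_\circ, Y_\circ)$ so as to absorb the correction exactly (producing an anisotropic hyperbola rather than the pure level set of $X^2 - Y^2$); or invoke the matroid-filtration program alluded to in the paper to eliminate the configurations in which the cross-term would violate separability. Either way, once the separator is secured for \eqref{eq:DVO}, the corresponding rectangular hyperbola for \eqref{eq:CVO} follows mechanically by applying the inverse $\pi/4$ rotation.
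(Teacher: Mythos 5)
First, a point of calibration: the paper does not prove this claim. It is stated as an explicit conjecture, supported only by (i) the observation that the objective $\sigma_X^2-\sigma_Y^2$ has the same algebraic form as the equation of a hyperbola, (ii) the identity $\sigma_U^2-\sigma_V^2 \propto \sigma_{XY}$ under $U=X+Y$, $V=X-Y$, and (iii) the numerical separability experiments in Table~\ref{tab:sep}. Your reduction of \eqref{eq:CVO} to \eqref{eq:DVO} is exactly the paper's, and your swap computation and KKT heuristic go strictly further than the paper does: the formula for $\Delta$ is correct (it is $1/k$ times the change in the paper's objective, which is immaterial), and the leading term does recover the level sets of $X^2-Y^2$ centered at the inlier mean, matching the conjectured $\Gamma$. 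One small caution on the reduction: a hyperbola $(U-U_\circ)^2/A^2-(V-V_\circ)^2/B^2=C$ pulls back to a rectangular hyperbola of the form \eqref{eq:hyper} only when $A=B$; for $A\neq B$ the pullback acquires an $XY$ cross term, which is harmless for ``separable by some conic'' but matters if one wants the specific $\mathcal L_4$-separability the paper tests for \eqref{eq:CVO}.

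That said, your proposal is not a proof, and the gap is the one you yourself flag. The pairwise swap conditions $\Delta\leq 0$ do not decouple: the correction $\tfrac{1}{k^2}\bigl[(X_o-X_i)^2-(Y_o-Y_i)^2\bigr]$ depends jointly on the pair $(i,o)$, so optimality yields $k(n-k)$ coupled inequalities rather than a single threshold on a function of one point. Your first proposed fix---letting $(X_\circ,Y_\circ,A,B)$ ``float to absorb the correction''---cannot work as stated, because a fixed conic has only finitely many parameters while the correction varies with the pair; absorbing it exactly would require the separator to depend on which pair is being tested. Your second fix (the matroid-filtration program) is the paper's own stated hope for future work and is likewise not carried out. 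A further structural worry you do not address is that single-element swap optimality is only a necessary condition at the global optimum; even if all $\Delta\leq 0$ were shown to imply a consistent threshold, one would still need to rule out configurations where multi-element exchanges are what enforce optimality. So the honest status of your argument matches the honest status of the paper's: a well-motivated conjecture with a concrete but unclosed reduction, not a theorem.
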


\subsection{Coefficient of determination}
We now return to \eqref{eq:CDO}. The trick to $R^2$ is to both consider \eqref{eq:TVO} and the principal components of $\mathcal I$. A distribution with a high $R^2$ will have a large difference $\lambda_1 - \lambda_2$---or, very similarly, $\sqrt{\lambda_1^2 - \lambda_2^2}$---indicating that the points are far more ``spread'' along the first principal component $\mathbf v_1$ than the second component $\mathbf v_2$. Geometrically, the difference corresponds to the ``eccentricity'' $e$ of the distributions (for parametric distributions that are inherently elliptical in nature, our analogy is literal). Very similarly, the ratio $\lambda_1 / \lambda_2$ gives the anisotropy of our data, a geometrically similar property.

Our task of finding $\mathcal I$ thus corresponds to finding the subset that is most eccentric, or ``spread'' in some direction $\mathbf v_1$. Conversely, if we project our data onto its principal components, we rotate and shear $\mathbf v_1$ and $\mathbf v_2$ such that they now point in the direction of the $Y$- and $X$-axes, respectively. If our data are whitened, our subset $\mathcal I$ now corresponds to the subset that is most ``vertically stretched,'' a problem strongly reminiscent of \eqref{eq:DVO}. We therefore draw parallels to \eqref{eq:hyper}

Returning to our original vector space and inverting the whitening transform, $\Gamma$ becomes a sheared hyperbola no longer in standard or rectangular form. It is instead given by a more general conic section
\begin{equation}
    \Gamma: A \left( X - X_\circ \right)^2 + B \left( Y - Y_\circ \right)^2 + C \left( X - X_\circ \right) \left( Y - Y_\circ \right) =D.
    \label{eq:hyper_bound}
\end{equation}
Effectively, we have introduced two location parameters and three scale parameters, necessitating a more complex lift function:
\begin{equation}
    \mathcal{L}_5: \mathbb{R} \to \mathbb{R}^5, \quad x \mapsto \mathcal{L}_5(x) = \left(x^2, xy, y^2, x, y \right).
\end{equation}
Although our final proof remains underway, we claim the following:
\begin{claim}
    In \eqref{eq:CDO}, $\mathcal I$ and $\mathcal O$ are quadratically separable.
    \label{claim:sep}
\end{claim}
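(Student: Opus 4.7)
The plan is to reduce \eqref{eq:CDO} to the covariance-maximization problem \eqref{eq:CVO} via a post-hoc PCA-whitening argument, then invoke the separability claim already established for \eqref{eq:CVO}. Fix an optimal inlier set $\mathcal{I}^*$ with centroid $(\bar{X}, \bar{Y})$ and covariance matrix $\Sigma$, and let $T : \mathbb{R}^2 \to \mathbb{R}^2$ be the affine whitening map $T(X, Y) = \Sigma^{-1/2}\bigl((X, Y) - (\bar{X}, \bar{Y})\bigr)$, which sends $\mathcal{I}^*$ to a centered, isotropic configuration. The image of $\mathcal{O}$ under $T$ then constitutes the comparison set in the transformed frame.

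First, I would catalogue the invariances of $R^2$: translations and independent axis-aligned positive scalings preserve it exactly, while rotations and shears alter it in a controlled way through the standard transformation rule for the covariance matrix. Second, I would work in the whitened frame, where $\mathcal{I}^*$ has unit marginal variances and zero cross-covariance; here the local landscape of $R^2$ near $\mathcal{I}^*$ is governed to first order by perturbations of $\sigma_{XY}$, while $\sigma_X^2$ and $\sigma_Y^2$ are pinned to $1$ up to the same order. A single swap $\bigl(\mathcal{I}^* \setminus \{i\}\bigr) \cup \{o\}$ therefore changes the numerator of $R^2$ by a quantity linear in the whitened coordinates of $i$ and $o$, reducing the pairwise swap inequality to a comparison of $|\sigma_{XY}|^2$ values---exactly the objective of \eqref{eq:CVO}. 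Third, invoking the claimed rectangular hyperbolic separator \eqref{eq:hyper} yields a boundary $\Gamma'$ isolating $\mathcal{I}^*$ from $\mathcal{O}$ in whitened coordinates. Fourth, I would pull $\Gamma'$ back through $T^{-1}$; since the preimage of a quadric under an affine map is again a quadric, the rectangular form $(X' - X_\circ')(Y' - Y_\circ') = A$ expands into a general conic $A(X - X_\circ)^2 + B(Y - Y_\circ)^2 + C(X - X_\circ)(Y - Y_\circ) = D$ with coefficients determined by $\Sigma^{1/2}$, matching \eqref{eq:hyper_bound} exactly.

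The principal obstacle is closing the gap between the local reduction in the second step and the global swap inequality. Whitening is post-hoc and tied to $\mathcal{I}^*$, so a large swap can substantially rotate the principal axes of the candidate subset, making the second-order perturbations of $\sigma_X^2$ and $\sigma_Y^2$ comparable to the first-order perturbation of $\sigma_{XY}$, in which case the clean reduction to \eqref{eq:CVO} breaks down. To close this gap I would attempt either an algebraic route---clearing denominators in $R^2(\mathcal{I}^*) \geq R^2\bigl((\mathcal{I}^* \setminus \{i\}) \cup \{o\}\bigr)$ yields a polynomial inequality whose dependence on $(X_o, Y_o)$ is of total degree at most four, and one would try to factor it into $q(X_i, Y_i) \geq q(X_o, Y_o)$ for a quadratic $q$ whose coefficients depend only on $\mathcal{I}^* \setminus \{i\}$---or, along the lines suggested by the author, a matroid-filtration argument that builds the separating conic incrementally over nested $R^2$-level sets and propagates separability through rank-one updates to the sufficient statistics. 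The algebraic route seems more tractable but may require careful handling of the quartic denominator; the matroid route is more conceptually satisfying but depends on a convenient exchange axiom that is not obviously available for the $R^2$-ordering.
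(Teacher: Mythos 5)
There is a genuine gap here, and it is worth being clear that the paper itself does not close it either: Claim \ref{claim:sep} is explicitly left as a conjecture (``our final proof remains underway''), supported only by the same PCA/whitening heuristic you describe and by experimental evidence. Your proposal essentially reconstructs the paper's motivating narrative --- whiten with respect to $\mathcal I^*$, argue that the problem locally resembles \eqref{eq:CVO}, borrow the rectangular hyperbola \eqref{eq:hyper}, and pull it back through the affine map to obtain the general conic \eqref{eq:hyper_bound} --- but the load-bearing step is circular: the separator for \eqref{eq:CVO} is itself only Claim 2 in the paper, also unproven, so the reduction transfers the difficulty rather than resolving it.

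Two further points where the plan would fail as written. First, there is no small parameter justifying the perturbative step: replacing one inlier by one outlier is a rank-one update of the same order $O(1/k)$ to all five sufficient statistics $S_{XX}, S_{YY}, S_{XY}, S_X, S_Y$, so the claim that $\sigma_X^2$ and $\sigma_Y^2$ are ``pinned to $1$'' while $\sigma_{XY}$ moves at first order does not hold; you acknowledge this, but it means the reduction to \eqref{eq:CVO} is not merely locally fragile --- it is not valid at any order. Second, and more structurally, even a complete single-swap analysis is the wrong tool for separability. Clearing denominators in $R^2(\mathcal I^*) \geq R^2\bigl((\mathcal I^* \setminus \{i\}) \cup \{o\}\bigr)$ yields, at best, a quadratic $q_i$ with $q_i(X_i, Y_i) \geq q_i(X_o, Y_o)$ whose coefficients depend on which inlier $i$ was removed. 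Separability in the sense the paper needs is a statement about the full sets: a \emph{single} conic, equivalently a single hyperplane in $\mathbb R^5$ after $\mathcal L_5$, with all of $\mathcal I$ on one side and all of $\mathcal O$ on the other, i.e.\ disjointness of the convex hulls of the lifted images. A family of per-swap inequalities with varying $q_i$ does not assemble into one such separator without an additional argument, and that assembly is precisely the open content of the claim. Your closing suggestion of a matroid-filtration induction is the same future direction the paper gestures at; neither you nor the paper supplies the exchange property that would make it go through.
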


\begin{figure}
    \centering
    \begin{subfigure}[b]{0.3\textwidth}
        \centering
        \includegraphics[width=\textwidth]{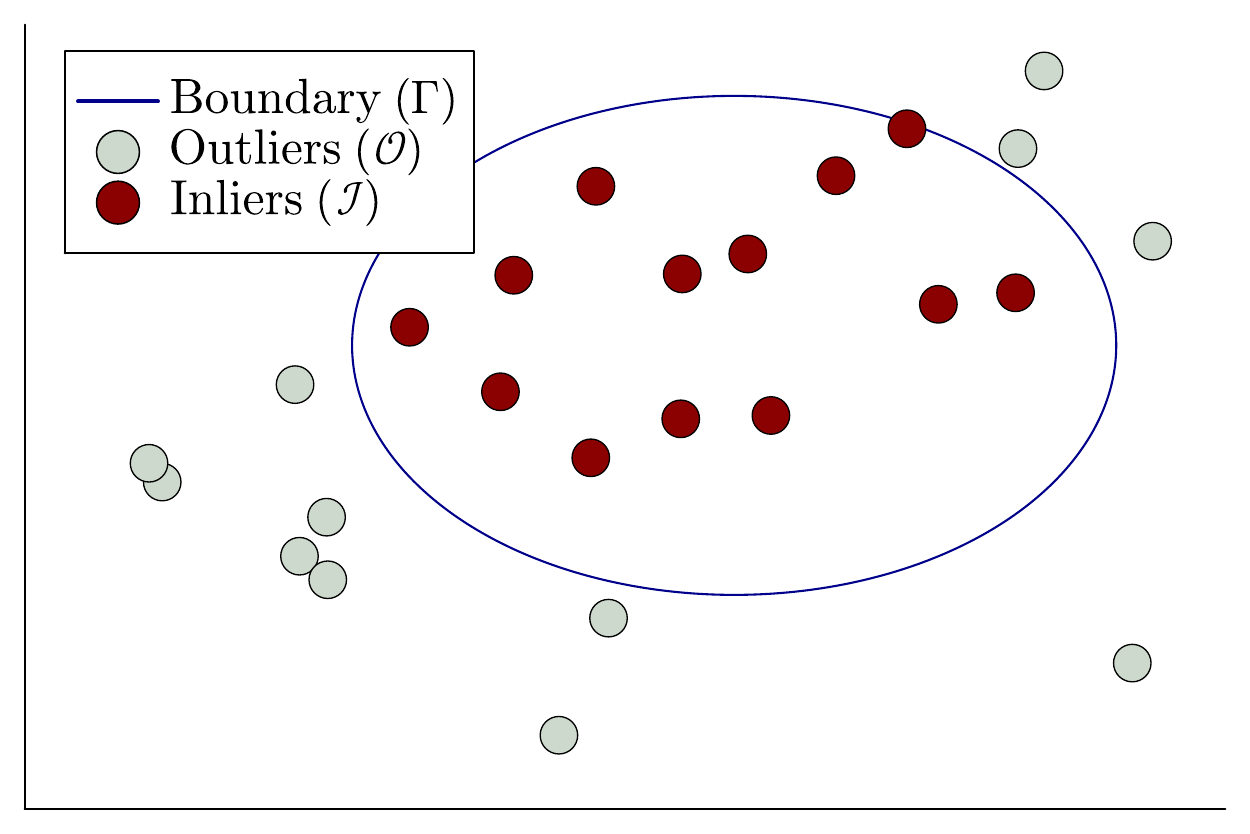}
        \caption{Total variation \eqref{eq:TVO}}
        \label{fig:tv}
    \end{subfigure}
    \begin{subfigure}[b]{0.3\textwidth}
        \centering
        \includegraphics[width=\textwidth]{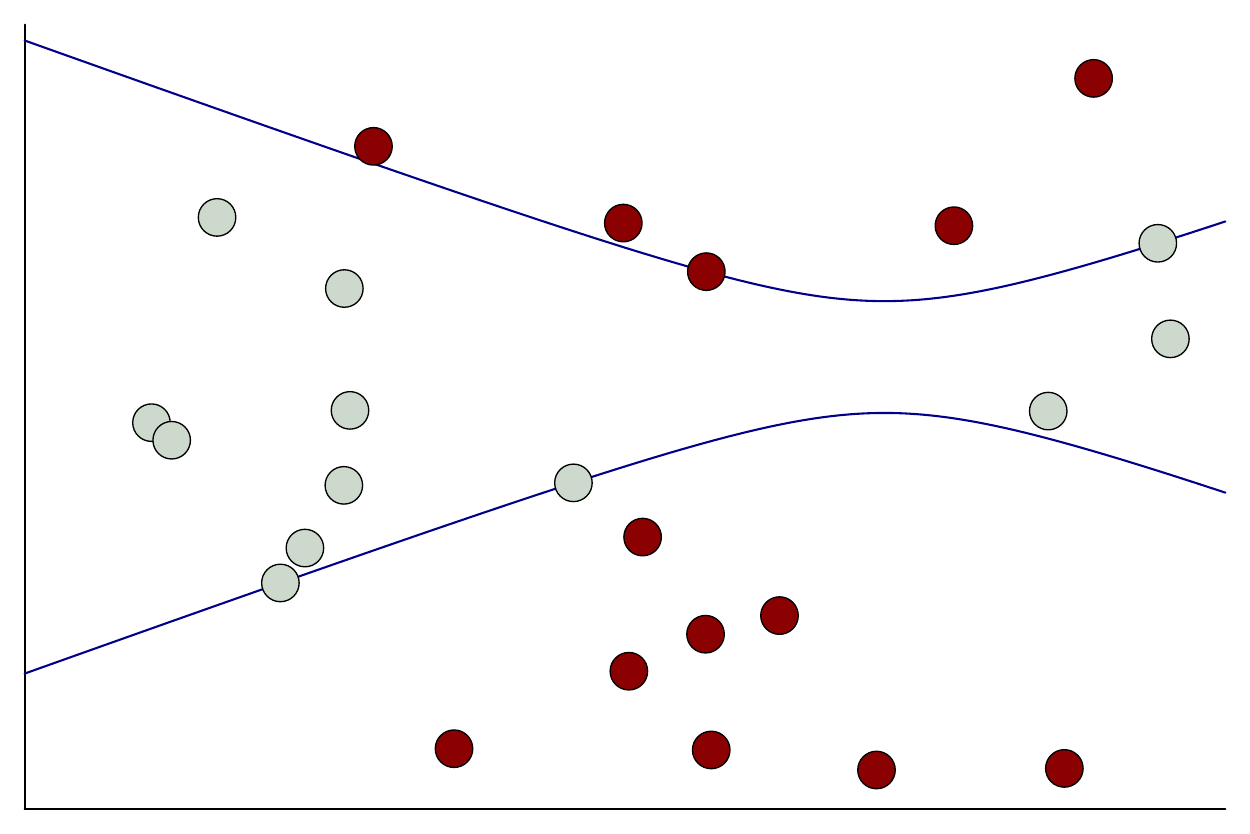}
        \caption{Difference of variance \eqref{eq:DVO}}
        \label{fig:dov}
    \end{subfigure}

    \begin{subfigure}[b]{0.3\textwidth}
        \centering
        \includegraphics[width=\textwidth]{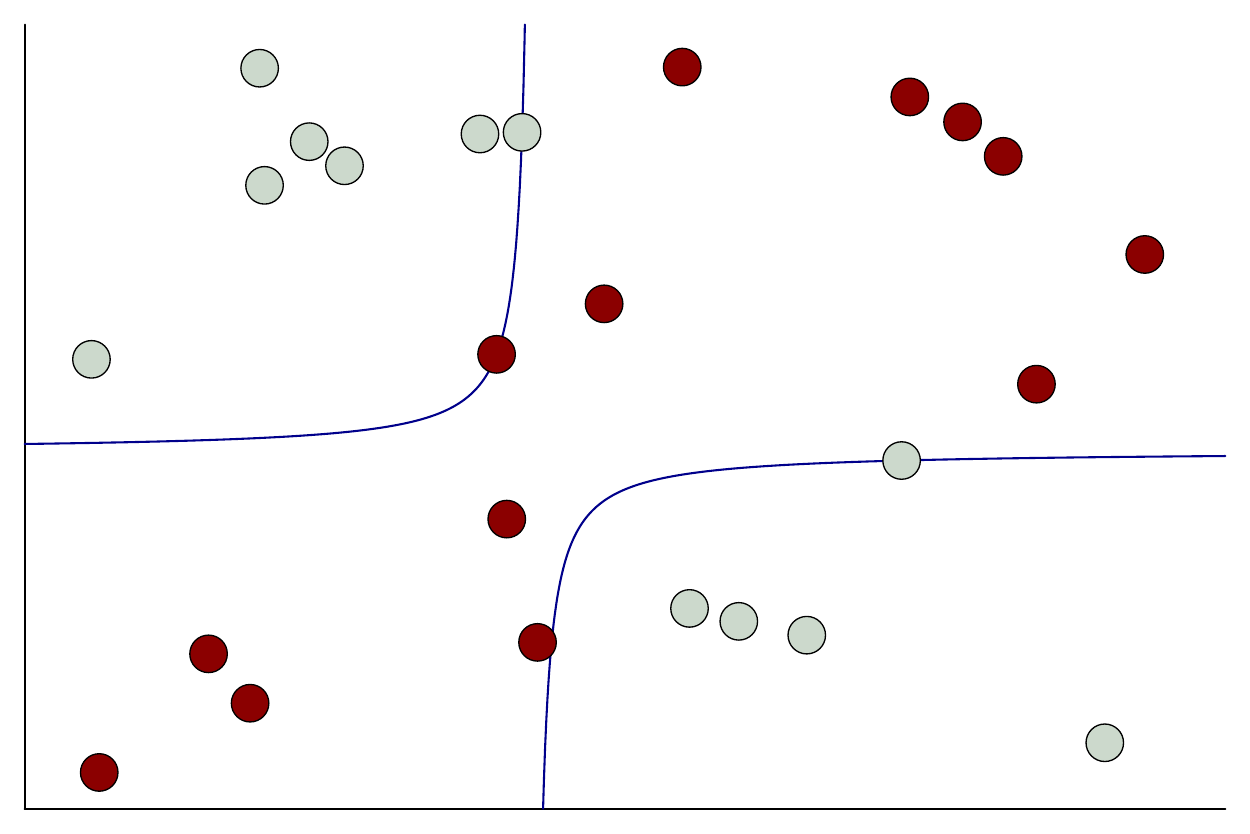}
        \caption{Covariance 
        \eqref{eq:CVO}}
        \label{fig:cov}
    \end{subfigure}
    \begin{subfigure}[b]{0.3\textwidth}
        \centering
        \includegraphics[width=\textwidth]{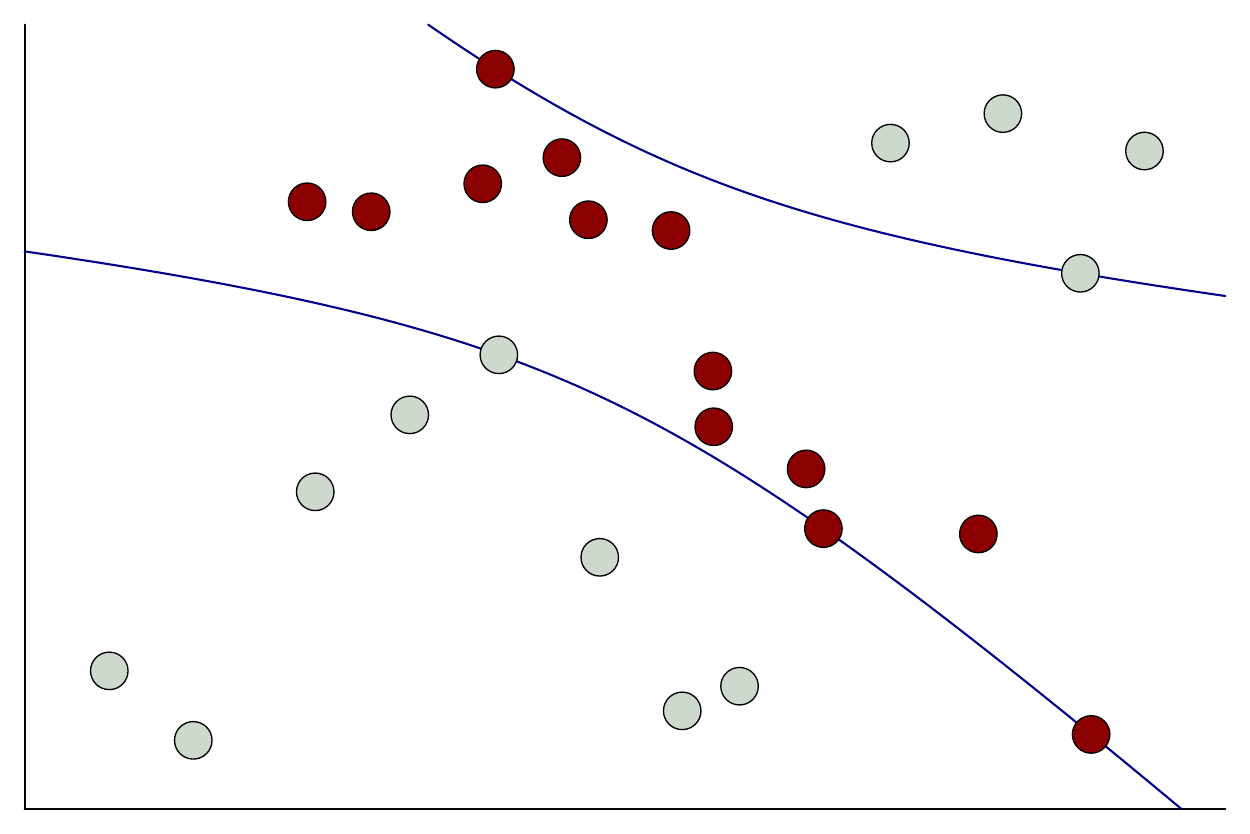}
        \caption{Coefficient of determination \eqref{eq:CDO}}
        \label{fig:cor}
    \end{subfigure}
    
    \caption{The quadratic decision boundaries $\Gamma$ for each objective function.}
    \label{fig:four_panel}
\end{figure}

\section{Experiments}
\label{sec:exp}

The following section provides experimental evidence for our claims. Scripts were written in Julia \cite{bezanson2017julia} with the JuMP \cite{Lubin2023} application program interface (API) for the Ipopt \cite{wachter2006implementation} optimizer. Each dataset for each trial was generated by the Mersenne Twister \cite{matsumoto1998mersenne}, seeded by an 128-bit binary string in turn produced by a second, primary random number generator. The primary generator also consisted of the Mersenne Twister and was seeded with the bitstring corresponding to 123 in decimal. Both $X$ and $Y$ were generated along independent uniform distributions on $\left[0, 1 \right]$ such that expected value of $R^2$ is 0. As to the exact algorithm for computing quadratic sweeps, the less efficient variant \textproc{NaiveQuadraticSweep} was employed. All experiments are fully reproducible and located at \url{https://github.com/marc-harary/QuadraticSweep}. As to hardware, all scripts were run on a Google Cloud Computing virtual machine with an AMD EPYC 7B13 processor; 1 core per socket, 2 threads per core, and 2 CPUs in total; and 8GB of RAM.

\subsection{Separability}

To empirically verify Claim \ref{claim:sep} and the other intuitions developed in Section \ref{sec:sep}, we attempted to linearly separate $\mathcal I$ and $\mathcal O$ after lifting the data by $\mathcal L_4$ and $\mathcal L_5$ for each objective function. If our reasoning is correct, we expect linear separability to be possible only when the lifting transform matches the analytic structure of the objective function. All five objectives should be linearly separable under $\mathcal L_5$; however, only $\sigma^2_x + \sigma^2_y$, $\sigma^2_x - \sigma^2_y$, and $\sigma_{xy}$ should be separable under $\mathcal L_4$.

Separation under each pair of transforms and objectives was attempted for $N=1,000$ trials by applying interior point methods \cite{wright1997primal} to \eqref{eq:HDO}. We partitioned $\mathcal D$ into $\mathcal I$ and $\mathcal O$ by brute-force combinatorial search prior to lifting in order to obtain the ground-truth solution. As to the sizes of the dataset, we let $k=10$ and $n=20$ for these broad initial tests. The distance $d^*$ was assumed to be non-zero within a tolerance $\epsilon = 1 \times 10^{-10}$. 

Results for each pair of transform and objective are tabulated in Table \ref{tab:sep}, including the rate of success at which $\mathbf V^{(\mathcal I)}$ and $\mathbf V^{(\mathcal O)}$ were linearly separated and the minimal primal and dual objective values across all trials. Combinations for which separability was not obtained with perfect accuracy are in bold. Our findings align with our predictions, with separability being impossible only for $r^2$ and $R^2$ under $\mathcal L_4$.

\begin{table*}
  \caption{Minimum values for quadratic separability via Ipopt for $N=1,000$ trials by objective and lift functions}
  \centering
  \begin{tabular}{lccccc}
    \toprule
    Objective     & Transform  & Success rate & Min. primal & Min. dual \\
    \midrule
    \multirow{2}{*}{$r$} & $\mathcal L_5$ & 1.000 & $8.54 \times 10^{-8}$ & $5.99 \times 10^{-8}$ \\
                                    & $\boldsymbol{\mathcal L_4}$ & \textbf{0.326} & $\mathbf{-2.46 \times 10^{-16}}$ & $\mathbf{-8.13 \times 10^{-8}}$ \\
    \multirow{2}{*}{$R^2$} & $\mathcal L_5$ & 1.000 & $1.78 \times 10^{-6}$ & $3.52 \times 10^{-6}$ \\
                                    & $\boldsymbol{\mathcal L_4}$ & \textbf{0.216} & $\mathbf{1.86 \times 10^{-16}}$ & $\mathbf{-8.80 \times 10^{-8}}$  \\
    \multirow{2}{*}{$\sigma_{XY}$} & $\mathcal L_5$ & 1.000 & $3.64 \times 10^{-6}$ & $7.23 \times 10^{-6}$ \\
                                    & $\mathcal L_4$ & 1.000 & $1.16 \times 10^{-7}$ & $1.70 \times 10^{-7}$  \\
    \multirow{2}{*}{$\sigma^2_X + \sigma^2_Y$} & $\mathcal L_5$ & 1.000 & $1.14 \times 10^{-5}$ & $2.27 \times 10^{-5}$  \\
                                    & $\mathcal L_4$ & 1.000 & $2.45 \times 10^{-5}$ & $4.90 \times 10^{-5}$  \\
    \multirow{2}{*}{$\sigma^2_X - \sigma^2_Y$} & $\mathcal L_5$ & 1.000 & $1.24 \times 10^{-6}$ & $2.43 \times 10^{-6}$ \\
                                    & $\mathcal L_4$ & 1.000 & $1.75 \times 10^{-6}$ & $3.47 \times 10^{-6}$  \\
    \bottomrule
  \end{tabular}
  \label{tab:sep}
\end{table*}

\subsection{Optimality}
We next sought to demonstrate the correctness of the quadratic sweep itself, while emphasizing the statistical limitations of a combinatorial approach to outlier detection. Comparisons were made to simulated annealing \cite{kirkpatrick1983optimization}, least trimmed squares \cite{rousseeuw1984least}, RANSAC \cite{fischler1981random}, and the Theil-Sen estimator \cite{theil1950rank}. For the first, we employed the simanneal \cite{simanneal} package for Python with a maximum temperature ($T_\text{max}$) of 100,000, minimum temperature ($T_\text{min}$) of 1, and maximum of 10,000 steps \cite{kirkpatrick1983optimization}. Though efficient algorithms for LTS exist \cite{mount2014least}, we simply applied brute-force combinatorial search to identify the subset with the smallest sum of squares. RANSAC, run with the default value of 100 steps and a minimum number of $k$ inliers, and the Theil-Sen estimator were implemented via Scikit-Learn \cite{pedregosa2011scikit}. 

Each algorithm was test for 1,000 trials for $n \in \left\{15, 20, 25 \right\}$, except for the quadratic sweep, which was tested for a total of 1,000,000, 100,000, and 10,000 trials, respectively. In each instance, $k$ was set to $\lceil n / 2 \rceil$. The first three methods were benchmarked both by computing the success rate as defined above and the ratio of the $R^2$ values of the ground-truth and predicted sets. Because RANSAC is not guaranteed to predict exactly $k$ inliers and computing the Theil-Sen estimator does not return any whatsoever, the success rates were not computed. Because the expected value of $R^2$ is 0, a useful robust estimator like Theil-Sen should thus return a similar result. On the other hand, at large $n$ and small $k$, it should be possible to find a $\mathcal I \subset \mathcal D$ with a non-zero $R^2$. In other words, if our theory is correct, $\mathbb R^2$, as we have defined it, could very well be greater than 0. 

\begin{figure}[b!]
    \centering
    \includegraphics[width=0.9\linewidth]{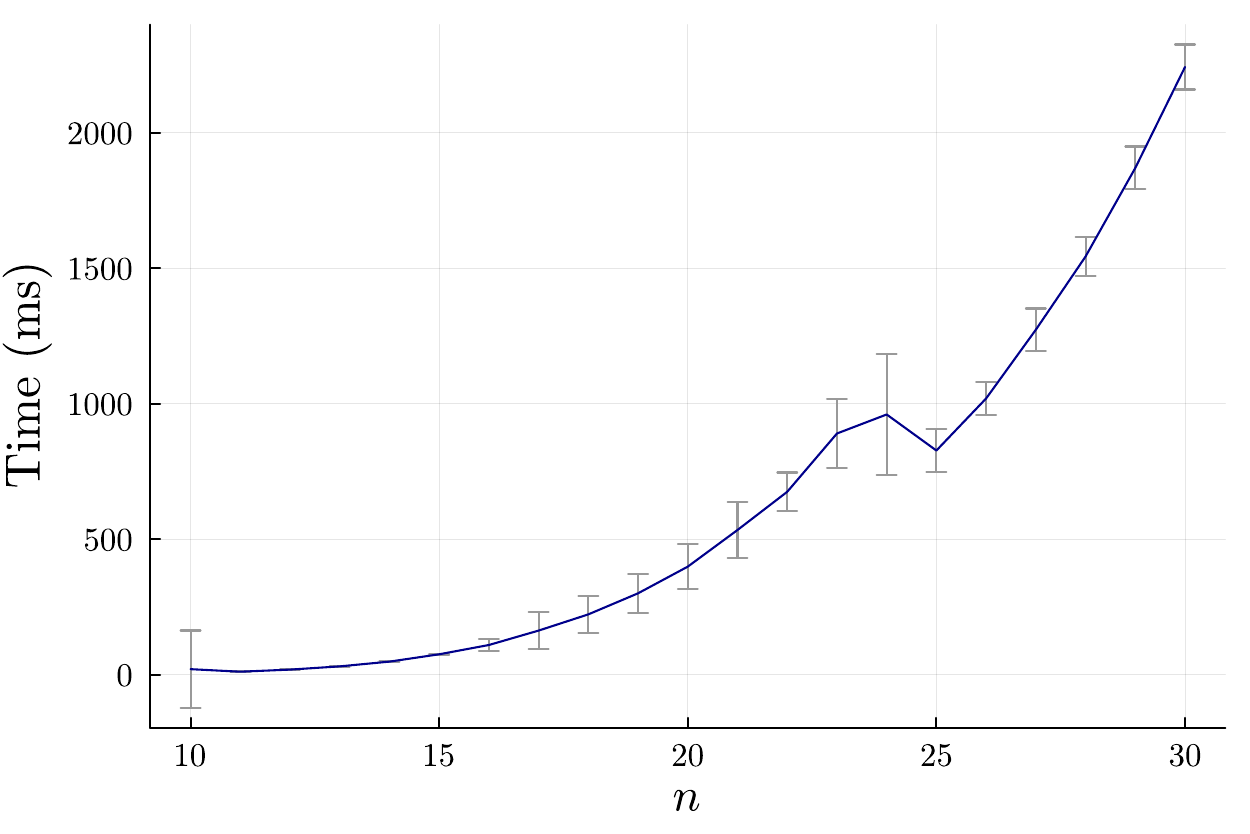}
    \caption{Trial time for the quadratic sweep versus the size of the dataset $n$ for $N=100$ trials}
    \label{fig:time}
\end{figure}

The quadratic sweep (Table \ref{tab:sims}) exhibited a 100\% success rate for the over 1,000,000 trials to which it was subjected, also obtaining a mean $R^2$ ratio of 1.000. Simulated annealing obtained a high success rate of 80.3\% at $n=15$, though performance dropped considerably for higher values of $n$. However, it still exhibited a relatively high $R^2$ ratio even at $n=25$ of 80.9\%. LTS likewise demonstrated performance that decreased monotonically with $n$ to a success rate of 19\% and $R^2$ ratio of 63.5\% at $n=25$. Interestingly, RANSAC's performance actually increased with $n$, although its $R^2$ ratio remained below 0 for all sizes. The Theil-Sen estimator obtained positive $R^2$ ratios, although they were quite close to 0.

The experimental findings hence recapitulate our mathematical results. They also highlight the pitfalls of a combinatorial algorithm for $R^2$ when $n$ is high and $k$ is low. Misapplying our method, we are able to ``cherry-pick'' a small subset with a high degree of collinearity despite the fact that the overall distribution contains no such trend.

\begin{table}
  \caption{Success rate and $R^2$ ratios by method and $n$ ($N=1,000$ trials)}
  \label{sample-table}
  \centering
  \begin{tabular}{lccr}
    \toprule
    Method & $n$ & Success rate & $R^2$ ratio \\
    \midrule
    \multirow{3}{*}{Quadratic sweep} & \bf 15\textsuperscript{*} & \bf 1.000 & \bf 1.000 \\
                                    & \bf 20\textsuperscript{†} & \bf 1.000 & \bf 1.000 \\
                                    & \bf 25\textsuperscript{\ddag} & \bf 1.000 & \bf 1.000 \\
    \multirow{3}{*}{Simulated annealing} & 15 & 0.803 & 0.989 \\
                                     & 20 & 0.062 & 0.912 \\
                                     & 25 & 0.002 & 0.809 \\
    \multirow{3}{*}{LTS} & 15 & 0.313 & 0.701 \\
                                     & 20 & 0.239 & 0.661 \\
                                     & 25 & 0.19 & 0.635 \\
    \multirow{3}{*}{RANSAC} & 15 & - & -0.332 \\
                                     & 20 & - & -0.286 \\
                                     & 25 & - & -0.262 \\
    \multirow{3}{*}{Theil-Sen} & 15 & - & 0.0190 \\
                                     & 20 & - & 0.0170 \\
                                     & 25 & - & 0.0138 \\
    \bottomrule
    \addlinespace[0.5em]
    \multicolumn{4}{l}{\textsuperscript{*} $N=1,000,000$} \\
    \multicolumn{4}{l}{\textsuperscript{†} $N=100,000$} \\
    \multicolumn{4}{l}{\textsuperscript{\ddag} $N=10,000$.}
  \end{tabular}
  \label{tab:sims}
\end{table}

\subsection{Performance}
We finally tested our implementation of \textproc{NaiveQuadraticSweep} in Julia to determine the relationship between trial time and dataset size. Sizes varied such that $n \in \left[10..30 \right]$ and once more we set $k = \lceil n / 2 \rceil$. Again, all tests succeeded in identifying the ground-truth value $\mathcal I$ as computed by brute-force combinatorial search. The mean time varied from 20.2 milliseconds (ms) at $n=10$ to 2,242.9 ms at $n=30$ (Table \ref{fig:time}), indicating computational efficiency.

\section{Conclusion}
\label{sec:concl}

In this work, we consider combinatorially optimizing the coefficient of determination $\left( R^2 \right)$ of a bivariate dataset $\mathcal D$. Called the \textit{quadratic sweep}, our method finds the subset $\mathcal O$ whose exclusion results in the greatest $R^2$. At the core of our algorithm is the observation that $\mathcal O$ and $\mathcal I$ are separable in $\mathbb R^2$ by a quadratic decision boundary $\Gamma$. Principal component analysis (PCA) \cite{hotelling1933analysis}, several simpler problems, and extensive experimental evidence motivate this conjecture. Search for the decision boundary can be performed by lifting the data to a higher-dimensional embedding space in $\mathbb R^5$ in which $\mathcal I$ and $\mathcal O$ become linearly separable. An efficient topological sweep \cite{edelsbrunner1986topologically, edelsbrunner1987algorithms} of $\mathbb R^5$ therefore searches for $\Gamma$, and, by extension, $\mathcal I$. While quadratic separability remains formally unproven, millions of trials furnish cogent experimental evidence of our algorithm and theorems' correctness.

A key limitation of our work is that fact that combinatorial optimization can easily mislead the statistician regarding the presence of a linear trend. If $k$ is small relative to $n$, quadratic sweeps can easily overfit to subsets with a large and unrepresentative $R^2$. In this respect, our approach can easily be too efficient for real-world applications if not carefully employed for a small number of outliers.

Future work should naturally aim to complete the proofs in Section \ref{sec:sep}. As mentioned in Section \ref{sec:sep}, we hypothesize that matroid filtrations and level sets might be useful for proofs by induction \cite{oxley2006matroid}, given the fact that similar proofs can be found in matroid optimization \cite{hochbaum1983efficient}. Improvements to the algorithm itself might also be made to further improve asymptotic runtime, perhaps by placing code to dynamically update $\mathcal I$ within the topological sweep of \cite{clarkson1988applications}; in a related fashion, dynamic convex hulls \cite{brodal2002dynamic} might otherwise be helpful in rendering the sweep more efficient. More practical implementations of quadratic sweeps might also eschew exactness, as geometric approximations of separability \cite{matouvsek1991approximate} often result in vastly shorter runtimes. Rather than enforcing strict separability, perhaps creating ``slabs'' of a certain width $\delta$ could provide valuable relaxations of the problem to render it more easily solvable. Similar approaches have been beneficial for support vector machines (SVMs) with soft margins \cite{kecman2001learning} and convex hull approximation \cite{chan2001dynamic, bentley1982approximation}.

Finally, generalizing quadratic separability to multivariate models may yield valuable insights into higher-dimensional metrics of goodness of fit. Such generalizations could amount to quadric separability, which, in turn, might offer an opportunity for similar lifting transforms and topological sweeps. Applications of our method to simpler objective functions in Section \ref{sec:sep} is also meant to illustrate the subtle importance of quadric surfaces to a wide variety of statistical estimators. Similar techniques may hold significant promise for geometric approaches to robust statistics in other contexts.

\bibliographystyle{plain}
\bibliography{references}

\end{document}